\newtheorem{assumption}{Assumption}
\newtheorem{theorem}{Theorem}
\newcommand{\newparallel}{{\mathrel{\mathpalette\new@parallel\relax}}}
\newcommand{\new@parallel}[2]{%
  \begingroup
  \sbox\z@{$#1T$}% get the height of an uppercase letter
  \resizebox{!}{\ht\z@}{\raisebox{\depth}{$\m@th#1/\mkern-5mu/$}}%
  \endgroup
}
\title{\LARGE \bf
Self-Supervised Meta-Learning for All-Layer DNN-Based \\ Adaptive Control with Stability Guarantees
}
\author{Guanqi He, Yogita Choudhary, and Guanya Shi% <-this % stops a space
% \thanks{*This work was not supported by any organization}% <-this % stops a space
\thanks{The authors are with the Robotics Institute, Carnegie Mellon University, USA.
        {\tt \{guanqihe, ychoudha, guanyas\}@andrew.cmu.edu}}%
\thanks{$^1$ Experimental videos are in project website: \href{https://sites.google.com/view/ssml-ac-project}{https://sites.google.com/view/ssml-ac-project}}
\thanks{We would like to express our gratitude to Prof. Wennie Tabib at Resilient Intelligent Systems Lab for providing the testing field and equipment. Special thanks to Chaoyi Pan for providing the crazyflie hardware codebase and for engaging in many insightful discussions.}
}
\begin{document}

\maketitle
\thispagestyle{empty}
\pagestyle{empty}

% A critical goal of adaptive control is enabling robots to rapidly adapt under environmental disturbance. Existing studies develop meta-learning schemes for adaptive control under disturbances that can be linearly parameterized with nonlinear features. In addition, existing methods typically assume that the disturbance-dependent environmental factors, so-called the disturbance hidden variables, are consistent within a linear trajectory. However, the disturbance can be non-linear to adaptive parameters, and disturbance latent variables can be unclassifiable or continuous among trajectories. This paper introduces Self-supervised Meta-learning (SSML) to learn a neural disturbance representation from offline trajectory history for rapid online adaptation. Our key insight is to construct self-supervised learning in time series by exploiting the time consistency of the disturbance latent variable. In particular, we perform an adapt update on the history data and evaluate the quality of disturbance prediction on future data to encourage the network to learn to adapt. A corresponding adaptive control scheme is proposed for online adaptation of the pre-trained neural network, and theoretical stability guarantee is provided for SSML in online adaptation. Experiment results demonstrate that SSML achieves $20\%$ less trajectory tracking error than the vanilla learning method.

%%%%%%%%%%%%%%%%%%%%%%%%%%%%%%%%%%%%%%%%%%%%%%%%%%%%%%%%%%%%%%%%%%%%%%%%%%%%%%%%
\begin{abstract}
A critical goal of adaptive control is enabling robots to rapidly adapt in dynamic environments. 
Recent studies have developed a meta-learning-based adaptive control scheme, which uses meta-learning to extract \emph{nonlinear} features (represented by Deep Neural Networks (DNNs)) from offline data, and uses adaptive control to update \emph{linear} coefficients online. 
However, such a scheme is fundamentally limited by the linear parameterization of uncertainties and does not fully unleash the capability of DNNs.
This paper introduces a novel learning-based adaptive control framework that pretrains a DNN via self-supervised meta-learning (SSML) from offline trajectories and online adapts the \emph{full} DNN via composite adaptation.
In particular, the offline SSML stage leverages the time consistency in trajectory data to train the DNN to predict future disturbances from history, in a self-supervised manner without environment condition labels. The online stage carefully designs a control law and an adaptation law to update the full DNN with stability guarantees.
Empirically, the proposed framework significantly outperforms (19-39\%) various classic and learning-based adaptive control baselines, in challenging real-world quadrotor tracking problems under large dynamic wind disturbance$^1$.

\begin{figure*}[h]
    \centering
    \includegraphics[width=0.99\linewidth]{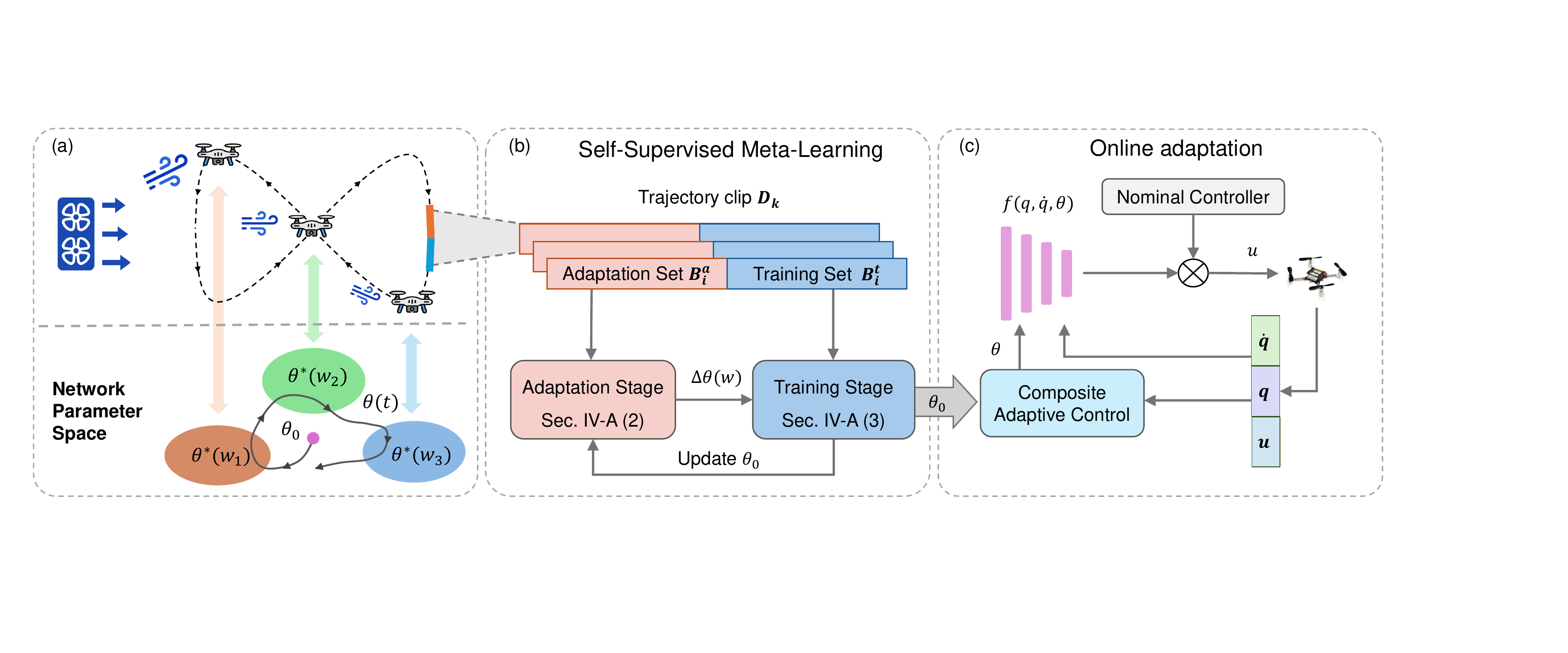}
    \vspace{-0.3cm}
    \caption{(a) The goal of pretraining is to find an initial DNN parameter $\theta_0$ such that, for all environment conditions $w$, $\theta_0$ is close to the optimal DNN parameter $\theta^*(w)$. (b) Pretraining $\theta_0$ via self-supervised meta-learning (SSML) from offline trajectories. (c) Adapt the DNN parameter $\theta$ online using composite adaptive control. }
    \label{fig:long-diagram}
\end{figure*}

% A critical goal of adaptive control is enabling robots to rapidly adapt in dynamic environments. Existing studies develop meta-learning schemes for adaptive control when the system is subject to disturbances that can be linearly parameterized with nonlinear features. However, the disturbance can be non-linear with respect to adaptive parameters, and the linear parameterization limits the capability of Deep Neural Network (DNN) to adapt effectively. This paper introduces a Self-supervised Meta-learning (SSML) approach to pretrain a neural disturbance representation from offline trajectory history and adapt the full neural network for rapid online adaptation. Our key insight is to construct self-supervised learning on pre-collected trajectory dataset by exploiting the time consistency of the local disturbance condition. In particular, we perform an adaptive update on the history data and evaluate the quality of disturbance prediction on future data to encourage the network to learn to adapt. A corresponding adaptive control scheme is proposed for online adaptation of the pre-trained network weights, and asymptotic stability guarantee is provided for online adaptation algorithm. Experiment results demonstrate that SSML achieves $20\%$ less trajectory tracking error than the vanilla learning method.
% exhibits more precise and rapid adaptation to unknown disturbances than other adaptive controllers$^1$.
% \input{Abstraction}

\end{abstract}

\section{Introduction}

% \begin{itemize}
%     \item Adaptive control is important XXX in dynamic environments ...
%     \item Adaptive in general assumes \emph{parametric uncertainties} (i.e., XXX). Therefore, various methods leveraging pretraining XXX: neural-fly, new neural-fly, control-oriented, ... 
%     \item However, they all assume linear parametrics, which didn't fully unleash the potential of DNN. 
%     \item In this paper, we XXX.
%     \item Our contributions are as follows:
%     \begin{itemize}
%         \item Framework: SSML framework for ...
%         \item Stability guarantees 
%         \item Experiments
%     \end{itemize}
% \end{itemize}

Real-time adaptive control plays a pivotal role in enabling robots to operate in dynamic and complex environments, including drones in wind \cite{shi2019neural}, manipulators with unknown objects \cite{zeng2019tossingbot}, and quadrupeds on uneven terrains \cite{Lee2020Quadrupedal}.
% , which highlight the critical need for real-time adaptation to environmental disturbances.
% Real-time adaptive control plays a pivotal role in enabling robots to operate in dynamic and complex environments. Examples in robotics include drones adapting to varying wind conditions \cite{shi2019neural}, manipulators handling unknown interaction forces \cite{zeng2019tossingbot}, and quadrupeds traversing unpredictable terrains \cite{Lee2020Quadrupedal}. These examples highlight the critical need for real-time adaptation to environmental disturbances across different robotic systems.

Adaptive control in general focuses on handling parametric uncertainties with theoretical stability guarantees, where the model structure of the disturbance is assumed to be known, but a set of parameters in the given model is unknown. 
% \guanya{goal: stability... (differentiate from RL stuff)}
% In particular, recent works utilize meta-learning to pretrain nonlinear features from offline data. After learning nonlinear features, various adaptive control algorithms can be used to adapt linear coefficients, such as direct adaptive control \cite{slotine1992direct}, composite adaptive control \cite{xie2023hierarchical} and model reference adaptive control \cite{joshi2019deepmodelreferenceadaptive}.
In particular, recent works have leveraged meta-learning to pretrain nonlinear features using offline data. Once these nonlinear features are learned, various adaptive control algorithms, such as direct adaptive control \cite{slotine1992direct}, composite adaptive control \cite{O_Connell_2022} and model reference adaptive control \cite{joshi2019deepmodelreferenceadaptive} can be employed to adapt the linear coefficients.
However, those works are fundamentally limited by the linear parameterization of model uncertainties, and do not fully unleash the capability of DNNs in online adaptation. 
% Therefore, various methods leverage neural representation and pretraining techniques to discover the disturbance structure from offline dataset \cite{shi2019neural, xie2023hierarchical, richards2022COML, tang2024mirrodescent}. 
% \guanya{In particular, people use meta-learning... (callback to abs). after learning features, can do various adaptive control such as ...}
% Often, these methods utilize meta-learning to approximate the non-linear disturbance using a neural representation, and various adaptive controllers are proposed to compensate the disturbance in real-time. 

% \guanya{merge this with previous sentence. emphasize all existing pretraining + adaptation are limited..} Despite these advances, the majority of the existing approaches assume that the disturbances can be linearly parameterized with learned non-linear features, and the adaptive controllers only adjust the linear coefficients of the non-linear neural representations, which cannot fully unleash the potential of DNNs. 
% \guanya{on the other hand, the robot learning and RL have shown ...}
On the other hand, recent works in robot learning and model-based reinforcement learning have demonstrated the effectiveness of meta-learning-based full DNN adaptation \cite{nagabandi2019learningadaptdynamicrealworld, song2020rapidlyadaptableleggedrobots}. However, these works lack the necessary theoretical stability analysis, posing challenges in stability and robustness.
% In addition, most of existing work on adaptive control oriented meta-learning assumes the disturbances are categorizable in offline training dataset, i.e., disturbance latent variables are time-invariant within a single trajectory. However, there are cases where disturbance latent variables are time-variant during data collection stage, such as wind disturbance. Therefore, in this case, the learned features using existing methods only learn an averaged disturbance on the time-variant disturbance latent variables, failing to capture the exact structure of the disturbances.

% Real-world disturbances often exhibit highly non-linear behavior and cannot be adequately captured by linear parameterization. 
In this paper, we bridge this gap by introducing a novel Self-Supervised Meta-learning for All-Layer DNN-based Adaptive Control (SSML-AC) framework. This framework pretrains a DNN via self-supervised meta-learning (SSML) from offline trajectories and adapts the full DNN online using composite adaptation, as shown in Fig. \ref{fig:long-diagram}.
In particular, the offline SSML stage leverages time consistency in trajectory data to train the DNN to predict future disturbances in a self-supervised manner, without requiring environment condition labels. The online stage carefully designs a control law and a speed-gradient composite adaptation law to update the full DNN, with theoretical stability guarantees. Our contribution is three-fold:
% In this paper, we address this gap by introducing a novel Self-Supervised Meta-learning for All-Layer DNN Adaptive Control(SSML-AC) framework, which is designed to learn a deep neural disturbance representation from offline pre-collected trajectories and adapt the full network weights in both meta-learning and online adaptation stages. 
% By exploiting the time-consistency of local disturbance conditions, we introduce the self-supervised disturbance prediction loss in the meta-learning framework, so that the framework is capable of handling non-linear and time-variant disturbances, and the disturbance type is not required for the dataset. Our contribution is three-fold:
\begin{itemize}
    \item \textbf{SSML-AC Framework}: We propose SSML-AC framework, incorporating the self-supervised meta-learning stage for DNN pretraining and the adaptation law for online full DNN update.
    \item \textbf{Stability Analysis}: Our Lyapunov analysis not only guarantees exponential stability of the closed-loop system, but also guides the loss design and hyper-parameter tuning in the pretraining process.
    \item \textbf{Experimental Validation}: Our method demonstrates state-of-the-art quadrotor adaptive control performance in challenging real-world tracking tasks under highly dynamic wind disturbances, outperforming classic and learning-based adaptive control baselines by 19-39\%.
    % \item \textbf{Stability Analysis}: We provide theoretical stability guarantees for online adaptation of pretrained weights, ensuring asymptotic convergence in tracking error under environmental disturbance.
    % \item \textbf{Experimental Validation}: We verify the proposed algorithm in UAV trajectory tracking tasks under wind disturbance. Experiment results show that our controller achieves $20\%$ less tracking error than vanilla learning-based adaptive control schemes.
\end{itemize}

% \begin{figure}[!t]
%     \centering
%     \includegraphics[width = 1.0\linewidth]{}
%     \vspace{-0.5cm}
%     \caption{Flowchart for Self-Supervised Meta-Learning for All Layer DNN-based Adaptive Control (SSML). SSML leverages self-supervised meta-learning to pretrain a deep neural representation $f(q, \dot q, \theta)$ for predicting environment disturbance, enabling rapid online adaptation of full network parameters. In both meta-learning and online adaptation phases, the full network parameters are adapted, allowing faster adaptation speed and more accurate disturbance prediction result compared to standard meta-learning adaptive control approaches ,which only updates the last layer linear coefficient of $f$.}
%     \label{fig: diagram}
% \end{figure}

The paper is organized as follows: Section \ref{related-works} reviews relevant background on adaptive control and meta-learning. Section \ref{problem-formulation} formulates the problem of meta-learning for DNN-based online adaptation. In Section \ref{frame-work}, we introduce the SSML-AC training and adaptation method, followed by stability analysis in Section \ref{sec:stability-analysis}. Section \ref{experiment} presents real-world experiments on a Crazyflie quadrotor to demonstrate the effectiveness of SSML-AC framework. Finally, Section \ref{conclusion} concludes the paper.

\section{Related Works} \label{related-works}
% Numerous approaches in the literature study different methods to improve system performance in the presence of varying disturbances. The key works are summarized as follows:
    \subsection{Adaptive Nonlinear Control} Adaptive control for handling parametric uncertainty has been extensively studied over the past few decades \cite{relatedwork_nonadapt, relatedwork_nonadapt2, robustadaptivecontrol}, with basis functions playing a crucial role in control performance. 
    Many studies focus on unknown linear coefficients for known basis functions, proposing various adaptive control algorithms such as direct adaptive control\cite{sobel1994directadaptive}, indirect adaptive control\cite{sastry1989indirectadaptivecontrol}, composite adaptive control\cite{SLOTINE1989compositeadaptive} and speed-gradient adaptive control\cite{romeo2008speedgradient}. 
    However, designing efficient basis functions is often challenging and inaccurate in practice. 
    An alternative approach utilizes random features as the basis functions like random Fourier features \cite{benrecht2007randomfeatures} and radial basis functions \cite{slotine1992radialbasisrecursiveidentification}. 
    While by passing the basis function design, they are in general suboptimal and redundant, slowing down the convergence speed of the adaptation process. 
    Some multirotor flight control methods, such as Incremental Nonlinear Dynamic Inversion (INDI) \cite{karaman2018indi} and L1 adaptive control \cite{Mallikarjunan2012L1AC}, directly estimate the disturbance, but these controllers are limited by lagging in disturbance estimation and noise amplification. 
    In this work, we leverage advances in deep learning to pretrain a DNN disturbance representation and empoly composite speed-gradient adaptive control for DNN-based online adaptation.
    % Some methods utilize physics informed modelling to learn good basis functions \cite{DBLP:journals/corr/abs-2003-07558} whereas other works such INDI \cite{DBLP:journals/corr/abs-1809-04048} and L1 adaptive control \cite{Mallikarjunan2012L1AC} bypass learning good basis functions and directly estimate the residual force vector. Other methods such as speed-gradient algorithm \cite{speedgradient}, implicit regularization in online adaptation \cite{Boffi_2021}.\\    
    % However, there has been a shift towards using multi-layer deep neural networks (DNNs) due to their superior representation power. It allows us to leverage the ability of DNNs to model complex, nonlinear relationships more effectively than shallow networks \cite{article}.
    % \guanqi{speed graident descent adaptive control, slotine lee, boffi, implicitly regularization in online adaptation}
    % \item Adaptive nonlinear control (copy the related work from neural-fly)
    %\item Adaptive control with pertaining (neural-fly, + other papers)
    %Meta-learning
    \subsection{Adaptive Control with Neural Networks}  
    % \guanqi{add Lip Net paper, add conclusion}
    Neural network-based adaptive control has also been widely studied. 
    Early works focused on shallow or single-layer neural networks \cite{johnson2003limited,NAKANISHI200571, Nakanishi2002learning}, while recent approaches shift towards using deep neural networks to model complex dynamics \cite{abbell2015helicopter, Abbeel2010AutonomousHA} or constrain network Lipschitz gain to ensure stability in DNN-based adaptive control\cite{zhou2021bridging}. 
    Since neural network controllers require appropriate initialization for convergence \cite{Khalil1995}, some studies pretrain DNNs before deployment to enhance control performance \cite{Shi_2019, sánchezsánchez2016realtimeoptimalcontroldeep, Michael2014DirectInverse}. 
    However, typical pertaining methods simply aggregate data and learn average disturbance, so the resulting neural networks lack rapid adaptability, which is critical in dynamic environments.
    %under time-variant environment conditions, typical pretraining methods only capture the average disturbance, offering little assistance for online adaptation.
    % This article employs DNNs to effectively petrain a disturbance representation, enabling the robot to predict and compensate environment disturbance in test time.
    \subsection{Meta-learning} 
    % In recent years, meta-learning has been utilized for situations where fast adaptation to new tasks and environments is critical \cite{DBLP:journals/corr/abs-1803-11347, DBLP:journals/corr/abs-2003-01239}.  Existing work in meta-learning-based adaptive control for estimating wind disturbances considers the controllable disturbances setting \cite{DBLP:journals/corr/abs-2103-01932, DBLP:journals/corr/abs-2106-06098} and the latent disturbance setting \cite{xie2023hierarchicalmetalearningbasedadaptivecontroller}. \guanqi{Self-supervsied meta-learning for NLP, time sequences processing}
    Meta-learning is a technique for training efficient models across different tasks or environments \cite{finn2017MAML, hospedales2020metalearningneuralnetworkssurvey, xie2023hierarchical}. Typically, the algorithm solving a specific task is the base-learner, while the algorithm optimizing the meta-objective is the meta-learner. In adaptive control, the adaptive tracking controller serves as the base-learner, with the average tracking error across trajectories as the meta-objective, and the controller's initial parameters as the meta-parameters. Many meta-learning-based adaptive controllers are regression-oriented \cite{O_Connell_2022, xie2023hierarchical, nagabandi2019learningadaptdynamicrealworld,shi2021meta,cui2023leveraging}, whereas others emphasize control-oriented meta-learning \cite{richards2022controlorientedmetalearning, tang2024metalearningadmirrordescent}. 
    Most meta-learning-based adaptive control algorithms assume disturbances are linearly parameterized by nonlinear features \cite{O_Connell_2022, richards2021adaptivecontrolorientedmetalearningnonlinearsystems, tang2024metalearningadmirrordescent}, which limits the capability of DNNs in online adaptation. Other methods fine-tune full network parameters online \cite{song2020rapidlyadaptableleggedrobots, nagabandi2019learningadaptdynamicrealworld}, but lack theoretical stability analysis. Our work bridges these approaches: SSML-AC adapts full network parameters while ensuring exponential convergence during online adaptation.

    % \guanqi{Recent work for adapitve meta-learning}

\section{Problem Statement} \label{problem-formulation}

% \begin{figure}[t]
%     \centering

%     \includegraphics[width = 0.8\linewidth]{}
%     % \vspace{-0.3cm}
%     \caption{Relationship between pretrained network initial parameter $\theta_0$ (purple circle) and optimal network parameter $\theta^*(w)$ (orange, green and blue ovals) under disturbance condition $w$. The objective of online adaptive control scheme is to drive the network parameter $\theta$ converges to an optimal network parameter $\theta^*$ that can accurately predict the disturbances, thereby the aim of online adaptation is to minimize network parameter residual $\Delta\theta=\theta-\theta^*(w)$. The role of pre-training is to find an optimal initial weight $\theta_0$ for all the disturbance conditions among the dataset, such that the network parameter residual $\Delta\theta$ is minimized.}
%     \label{fig: pretrain-weight}
%     % full network, adpative law, show wind strength, long version figure (\theta0 pretraining, adaptation)
%     % \vspace{-0.7cm}
% \end{figure}

We consider a general robotic system \begin{equation} \label{full-dynamics}
    M(q)\ddot q + C(q, \dot q)\dot q + g(q) = u + d(q, \dot q, w)
\end{equation} where $q$, $\dot q$, $\ddot q\in\mathbb{R}^n$ denote the pose, velocity and acceleration vectors, respectively. $M(q)$ is the symmetric, positive definite mass and inertia matrix, $C(q, \dot q)$ is the Coriolis matrix, and $g(q)$ is the gravitational force vector. $u\in\mathbb{R}^n$ is the control input and $d(q, \dot q, w)$ includes all unmodeled disturbances, with $w$ representing unknown environment condition. Given a target trajectory $q_d\in\mathbb{R}^n$, our goal is to design a control law $u$ that ensures robot pose $q$ converges to the reference trajectory $q_d$, despite the presence of unknown disturbance $d$.

Since the environment condition $w$ is unknown and time-varying, we assume the disturbances $d$ can be approximated by neural network $f$ with robot states $q$, $\dot q$ as input,
\begin{equation}\label{dist-approx}
    d(q, \dot q, w) \approx f(q, \dot q, \theta^*(w))
\end{equation} where $\theta^*(w)\in\mathbb{R}^p$ represents the network parameters depending on $w$, and the network is over-parameterized ($p\gg n$). Thus, for each disturbance $d$, there exists at least one $\theta^*(w)$ such that Eq. \eqref{dist-approx} holds. Our goal is to design an adaptive law \begin{equation} \label{eq:general-adaptive}
    \dot{\theta} = h(q, \dot q, u, \theta), \quad {\theta}(0) = \theta_0
\end{equation} where $\theta_0$ is the initial network parameter, and the adaptive law $h$ estimates $\theta^*(w)$ in real-time, compensating for $d$ to ensure the robot pose converges to the reference trajectory.

% Since $\theta$ is overparameterize, from \cite{pmlr-v97-allen-zhu19a}, we hypothesis
% Given that the neural network $f(x, \theta)$ is typically over-parameterized under common conditions, we rewrite \eqref{dist-approx} to make the following assumption:
Given the over-parameterization of $f$, for all possible environment condition $w$, we can rewrite Eq. \eqref{dist-approx} as:
\begin{equation}\label{residual-form}
    d(q, \dot q, w) \approx f(q, \dot q, \theta_0 + \Delta \theta)
\end{equation} where $\Delta\theta(w)=\theta - \theta^*(w)$ denotes the network parameter residual that requires online adaptation. This shows that the quality of the initial DNN parameter $\theta_0$ affects the residual $\Delta\theta$, and thus the tracking performance (see Fig. \ref{fig:long-diagram}(a)).

% \guanqi{the goad of pretraining}
The goal of pretraining is to find a good initial DNN parameter $\theta_0$ that is close to $\theta^*(w)$ across all environment conditions $w$, thereby minimizing the initial parameter residual $\Delta\theta(w)$.

% Since $\Delta\theta(w)$ is adapted online, the training process  

% \begin{assumption}\label{Assu-meta}
% For all $w$, there exists a shared meta-parameter $\theta_0$, along with a $w$-dependent parameter $\Delta\theta(w)$, such that
% \begin{equation}
%     d(q, \dot q, w) \approx f(q, \dot q , \theta_0 + \Delta \theta(w))
% \end{equation}
% and $\Delta \theta$ is relatively small.
% \end{assumption}
% Assumption \ref{Assu-meta} is visualized in Fig. \ref{fig: meta-parameter}.

% \begin{remark}
%     The assumption \ref{Assu-meta} is generally applicable. Fine-tuning the last layer of the neural network in \cite{shi2019neural} is a special case of this assumption and is equivalent to the setting $\Delta\theta = (0, \dots, 0, a)$, where $a$ represents the network parameters of the last layer.
% \end{remark}

% In the next section, we aim to find $\theta_0$ through meta-learning, such that for all $w$, the $\Delta\theta$ is relatively small, enabling whole-network rapid adaptation during testing.

% [meta-learning figure]
% $\theta_0$ can be found by streaming meta pretraining, and $\Delta \theta$ is estimated in realtime via online adaptation.

\section{SSML-AC Framework}\label{frame-work}
% \guanqi{SSML-AC framework}

% We aim to propose a meta-learning method to find a best $\theta_0$, such that for all $w$, the travelling distance $\|\Delta\theta(w)\|$ for all wind condition $w$ is minimum, so that we can achieve fast whole-network adaptation in the control stage.

% Since the residual parameter is adapted online, the training process of $\theta_0$ should consider the online adaptation stage, which leads to a natural multi-task setting for meta-learning with disturbance latent parameter $w$ as the task label \cite{}. %neural fly, control oriented, mirror descent
%  These works usually assume that $w$ is invariant along a single trajectory. However, $w$ can change continuously with time or state in the real environment (e.g. the position of the robot), and thus in many cases, trajectories cannot be classified according to the type of $w$.

Although recent meta-learning approaches for adaptive control \cite{shi2019neural, xie2023hierarchicalmetalearningbasedadaptivecontroller, richards2022COML, tang2024mirrodescent} address trajectory tracking under dynamic disturbances, these methods rely on linear parameterization with nonlinear features of disturbance structure, limiting the potential of DNNs in online adaptation. 
We propose the Self-Supervised Meta-Learning for All-Layer DNN-based Adaptive Control (SSML-AC) framework, which pretrains the DNN via self-supervised meta-learning \ref{meta-learning} and adapts the full DNN online using speed-gradient composite adaptation \ref{sec:online=adaptation}.

\subsection{Self-Supervised Meta-Learning}\label{meta-learning}

\begin{figure}
    \centering
    \includegraphics[width = 0.9\linewidth]{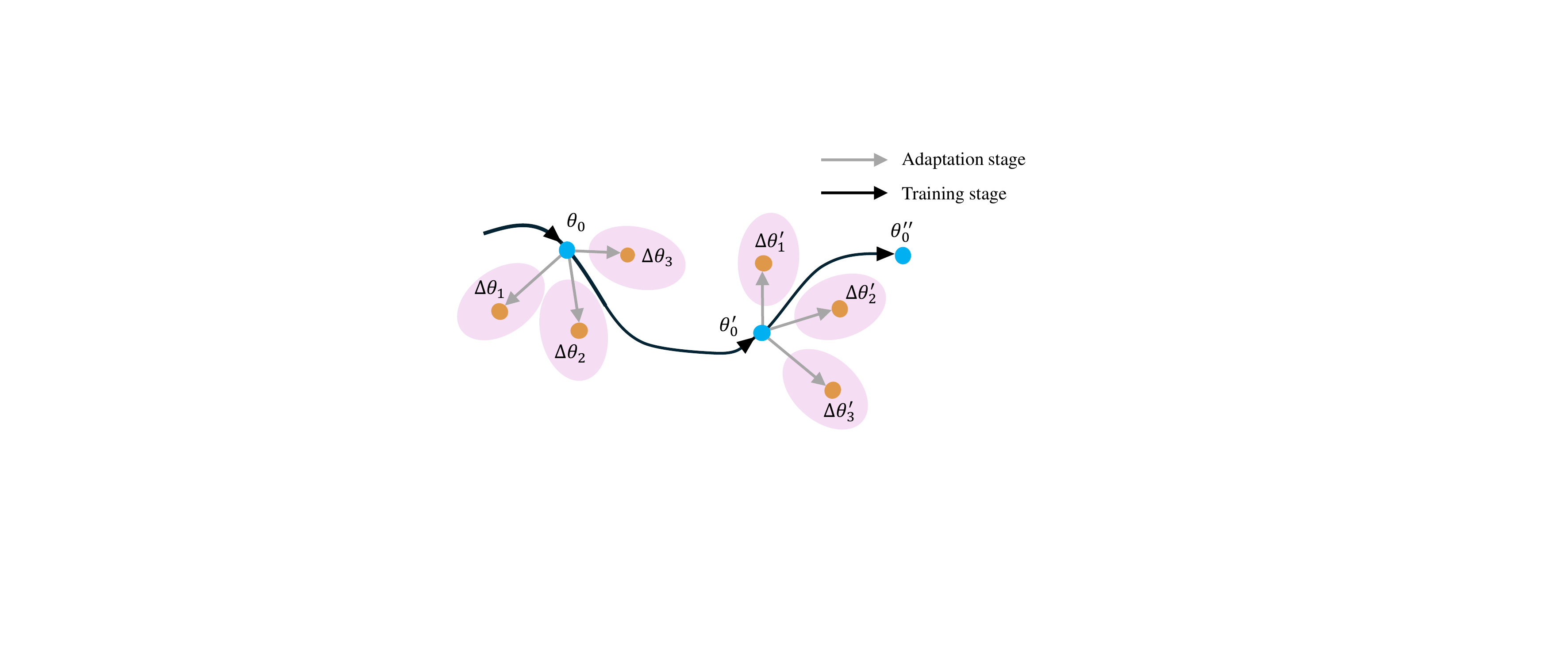}
    \vspace{-0.3cm}
    \caption{Self-supervised meta-learning (SSML) alternates between the adaptation stage and the training stage, pretraining the DNN initial parameter $\theta_0$ for rapid online adaptation.}
    \label{fig: meta-parameter}
    % \guanqi{make the dash line to balck line}
    % \vspace{-0.7cm}
\end{figure}

In this section, we introduce Self-Supervised Meta-Learning (SSML). Following previous works \cite{xie2023hierarchical}, we assume that $w$ changes continuously along the trajectory and leverage the temporal continuity of environment conditions for dataset generation and self-supervised learning. Pre-collected trajectories are divided into short clips, each split into adaptation and training sets. Inspired by \cite{finn2017MAML}, the SSML algorithm follows a Bi-level optimization structure with an adaptation stage as the inner loop and a training stage as the outer loop (Fig. \ref{fig:long-diagram} (b))

During the adaptation stage, gradient descent-based adaptation is applied to the adaptation set, computing the network parameter residual $\Delta\theta$. In the training stage, the prediction loss of the network $f(q, \dot q, \theta_0 + \Delta\theta)$ is evaluated on the training set to update $\theta_0$. 
% Therefore, SSML can be applied to any sequential trajectory without the need to classify and label the disturbance conditions.
The SSML training process is visualized in Fig. $\ref{fig: meta-parameter}$ and outlined in Algorithm $\ref{alg:SLM}$. 

% In this section, we introduce Streaming Meta-Learning (SML) method for pretraining the optimal neural network meta-parameter $\theta_0$ for disturbance prediction and rapid online adaptation. 
% The key idea of SML is to incorporate updates from online adaptation algorithms during the training process. Specifically, the elements of SML dataset are no longer the (state, disturbance) pairs but sequences of states and disturbances. The sequence is divided into two segments. We use the first part to compute the parameter update $\Delta\theta$ with multi-step gradient descent and use the second part to calculate loss and update meta-parameter $\theta_0$.
% The training process of SML is visualized in Fig $\ref{fig: training-diagram}$ and described in detail in Algorithm $\ref{alg:SLM}$. 

\subsubsection{Data Collection}

The SSML task dataset is defined as \begin{equation}
\begin{aligned}
    \mathcal{D}_k&=\{\mathcal{B}^a_k, \mathcal{B}^t_k\}\\
    \mathcal{B}^a_k &= \{x^{(j)}, y^{(j)}\}_{j=k}^{k+H_a}\\
    \mathcal{B}^t_k &= \{x^{(j)}, y^{(j)}\}_{j=k+H_a+1}^{k+H_a+H_t}\\
\end{aligned}
\end{equation} where $\mathcal{B}^a_k$ and $\mathcal{B}^t_k$ denote the adaptation and training sets, respectively, with $(x^{(j)}, y^{(j)})$ being the robot state and disturbance measurement at step $j$. The disturbance measurement is obtained via numerical differentiation with robot's nominal dynamics model \cite{miyato2018spectralnormalizationgenerativeadversarial}. $H_a$ and $H_t$ represent the adaptation and prediction horizons for each trajectory clip. The entire meta-learning dataset is constructed as \begin{equation}
    \mathcal{D} = \{\mathcal{D}_k\}_{k=1}^{N}
\end{equation}
% The whole training process across tasks $\mathcal{D}_k$ is performed via batched stochastic gradient descent. 

% We assume the training dataset contains $K$ pre-collected trajectories. The dataset is denoted as $\mathcal{D} = \{D_1, \dots, D_K\}$, where $D_k = \{x_k^{(i)}, y_k^{(i)}\}^{N_k}_{i=1}$ consists of $N_k$ pairs of input and output data. We have a sliding window of length $b$ for all sequential data across the entire dataset as the minimum training unit. For each trajectory clip $\mathbf{D}_{k, i} = \{x_k^{(j)}, y_k^{(j)}\}^{i+b}_{j=i}$, we choose first $c$ data points to serve the adaptation set $\mathbf{b}^a_{k, i}$ and the rest $b-c$ data points $\mathbf{b}^t_{k, i}$ serve as the training set. During each training iteration, the proposed algorithm executes two primary steps, described below.

% $\mathbf{D}_i = \{\mathbf{b}^a_i, \mathbf{b}^t_i\}$

\subsubsection{Adaptation stage}
The regression loss $\mathcal{L}(\theta, \mathcal{B})$ on dataset $\mathcal{B}$ is defined as \begin{equation}
    \mathcal{L}(\theta, \mathcal{B}) = \sum_{x^{(j)}, y^{(j)}\in \mathcal{B}} \|y^{(j)} - f(x^{(j)}, \theta)\|^2
\end{equation}

The \emph{adaptation stage} performs gradient descent on each adaptation set $\mathcal{B}^a_k$ to update $\Delta\theta_k$: 
% \begin{align}\label{eq:inner-update}
%     \Delta\theta_k' &= \Delta\theta_k - \alpha \nabla_{\theta}\mathcal{L}(\theta_0 + \Delta\theta_k, \mathcal{B}^a_k)
% \end{align}
\begin{align}\label{eq:inner-update}
    \Delta\theta_k' &= \Delta\theta_k - \alpha \nabla_{\theta}\mathcal{L}(\theta_0, \mathcal{B}^a_k)
\end{align}
 with adaptation learning rate $\alpha$ as a hyper-parameter. Although single-step gradient updates are consider, multi-step updates are also applicable.
 % The inner-loop gradient descent can be either single step or multi-step, with adaptation step $T$ as a hyperparameter. 
 % Note that adaptation step $T$ determines the flexibility of $\Delta\theta$ in adaptation stage. The more step the network is updated, the more close between the network output $f(x^{(j)}, \theta_0 + \theta_k)$ and the desired output $y^{(i)}$. However, too much update steps pushes the local optimal weights far away from the initial weights $\theta_0$, introducing more initial weight error into the adaptation stage and affecting the convergence of adaptive controller \eqref{adaptive-law}. More details are discussed in Stability Analysis Section in Sec. \ref{sec:stability-analysis}.

% The adaptation loss is defined as
% \begin{equation}
%     \mathcal{L}_{\mathcal{B}^a_k}(\theta) = \sum_{x^{(j)}, y^{(j)}\in \mathcal{B}^a_k} \|y^{(j)} - f(x^{(j)}, \theta_k)\|^2
% \end{equation}

\subsubsection{Training Stage}
The \emph{training stage} update $\theta_0$ by computing the prediction loss over a batch of tasks. The meta-objective is formulated as the prediction loss incorporating regularizations:
\begin{align}
    \mathcal{L}_{\text{meta}}(\theta_0) = & \sum_{\mathcal{D}_k\sim p(\mathcal{D})}\left[\mathcal{L}(\theta_0 + \Delta\theta_k, \mathcal{B}^t_k) \right.\\ &\left.+ \lambda_{\text{dir}} \mathcal{L}(\theta_0, \mathcal{B}^t_k)\right] + \lambda_{\text{norm}}\|\theta_0\|^2
\end{align} 
where $\lambda_\text{dir}$ and $\lambda_\text{norm}$ are regularization coefficients for direct prediction cost and $\theta_0$ norm cost, respectively. The direct prediction cost ensures the model's disturbance prediction capability without adaptation, consistent with the regularization term of the adaptive controller \eqref{adaptive-law}. The norm cost helps reduce the network's Jacobian norm \cite{allenzhu2020learninggeneralizationoverparameterizedneural}, thereby minimizing the disturbance terms in Eq. \eqref{eq:general-disturbance}. Thanks to the overparameterization nature of DNN, these regularization terms do not conflict with the primary objective of disturbance prediction.

Meta-optimization across tasks is performed via stochastic gradient descent (SGD), updating $\theta_0$ as follows:
\begin{equation}
    \theta_0 \leftarrow \theta_0 - \beta \nabla_\theta \!\sum_{\mathcal{D}_k\sim p(\mathcal{D})}\!\left[\mathcal{L}(\theta_k(\theta_0), \mathcal{B}^t_k) + \lambda \mathcal{L}(\theta_0, \mathcal{B}^t_k)\right]
\end{equation} with $p(\mathcal{D})$ as the task distribution. 
To enhance the stability of the neural network during online adaptation, spectral normalization is applied to $\theta_0$, constraining the network's Lipschitz constant\cite{Shi_2019}. 

% The necessity of constrain Lipschitz constant is discussed in section \ref{}.
% We also use the spectral normalizetion in training $\theta_0$, to control the Lipschitz property of the neural network, which is used in the following stability analysis section \ref{}.
\begin{algorithm}
    \caption{Self-Supervised Meta-Learning}
    \label{alg:SLM}
    \begin{algorithmic}
        \State Input: Trajectory dataset $\mathcal{D} = \{\mathcal{D}_{k}\}$
        \State Initialize: Neural network weight ${\theta_0}$
        \State Result: Trained neural network weight ${\theta_0}$
        \While{not done}
            \State Sample batch of tasks $\mathcal{D}_k$ from $\mathcal{D}$
            \For {\textbf{all} $\mathcal{D}_k$}
                \State $\mathcal{B}^a_{k}$, $\mathcal{B}^t_{k}\leftarrow \mathbf{D}_k$
                \State $\theta_k\leftarrow\theta_0$\Comment{Gradient descent on $\mathcal{B}^a_k$}
                    \State $\Delta\theta_k \leftarrow - \alpha \nabla_\theta \mathcal{L}(\theta_0, \mathcal{B}^a_k)$
                % \For{$t=1,\ldots, T$}                 
                    % \State $\Delta\theta_k \leftarrow \Delta\theta_k - \alpha \nabla_\theta \mathcal{L}(\theta_0+\Delta\theta_k, \mathcal{B}^a_k)$
                % \EndFor
                % \State Evaluation $\nabla_\theta \mathcal{L}_{\mathbf{b}^a_i}(f_\theta)$
            \EndFor
            \State  $\mathcal{L}_{\text{meta}}(\theta_0) = \sum_{\mathcal{D}_k\sim p(\mathcal{D})}\left[\mathcal{L}(\theta_0 + \Delta\theta_k, \mathcal{B}^t_k) +\right.$ \\ \hspace{6em} $\left. \lambda_{\text{dir}} \mathcal{L}(\theta_0, \mathcal{B}^t_k)\right]+ \lambda_{\text{norm}}\|\theta_0\|^2$
            \State $\theta_0 \leftarrow \theta_0 - \beta\nabla_\theta \mathcal{L}_{\text{meta}}(\theta_0)$ \Comment{Update $\theta_0$}
        \EndWhile
        % \REPEAT for $t\leftarrow 1$ to $N_k - b$  
    \end{algorithmic}
\end{algorithm}

\subsection{Adaptive Control or Composite adaptation}\label{sec:online=adaptation}

In the online adaptation stage, the goal is to minimize position tracking error. 
Given a twice-differentiable reference trajectory $q_d$, we define the composite velocity error \begin{equation}
    s = \dot{\Tilde{q}}+\Lambda \Tilde{q} = \dot q - \dot q_r \label{eq:composite-vel-err}
\end{equation} where $\Tilde{q} = q - q_d$ is the position tracking error, $\Lambda\in\mathbb{R}^{n\times n}$ is a positive definite matrix, and $q_r$ is the reference position. 
The nominal PD controller with disturbance compensation is defined as 
\begin{align}
    u = M(q)\ddot q_r+ C(q)\dot q_r + g(q) - K s - f(q, \dot q, \theta) \label{eq:control-law}
\end{align}
We propose the following adaptive controller for the system \eqref{full-dynamics} and nominal feedback controller \eqref{eq:control-law} using the velocity gradient descent method from \cite{fradkov1999nonlinear}. 

The adaptive law is
\begin{align}
    \bar{V} &= s^\top M(q) s  + (f(q, \dot q, \theta) - y)^\top\Gamma(f(q, \dot q, \theta) - y)\\
    \dot{\theta}&= -\gamma\nabla_\theta\dot{\bar{V}}(\theta) - \lambda (\theta - \theta_0)\notag\\ 
    & =-\gamma{J}^\top\left [ s + \Gamma(f - y) \right ]\! -\! \lambda (\theta - \theta_0), \quad \theta(0) = \theta_0\label{adaptive-law}
\end{align}
where $\bar V$ is a Lyapunov-like function, $y=d + \epsilon$ is the measured disturbance with measurement noise $\epsilon$, and ${J}=\frac{\partial f}{\partial \theta}(q, \dot q, \theta)$ is the Jacobian of $f$. $K$, $\Gamma$ are both positive definite gain matrices and $\gamma$, $\lambda$ are positive constant.

To constrain the Jacobian of $f$, we apply layer spectral normalization on $\theta$ during online adaptation:
\begin{equation}
    \theta^+_i = \text{Proj}(\theta^-_i) = \left\{\begin{array}{cc}
      \theta^-_i   &  \text{if} \|\theta^-_i\|<\nu\\
      {\theta_i^-}/{\|\theta_i^-\|}   & \text{otherwise}
    \end{array}\right.
\end{equation} 
Thus, the network parameter $\theta$ belongs to a bounded, compact set $\Theta$ during the online adaptation, further bound the Jacobian $J$ if the robot states are bounded.

The control law and adaptive law are designed to ensure the closed-loop system remains robust to imperfect learning and time-varying environment conditions. In the next section, we discuss the theoretical guarantees of the proposed control and adaptive scheme.

% Consider the Lyapunov-like function \begin{align}
%     V(\theta) &= \frac{1}{2} (\gamma_1 s^\top M s  + \gamma_2 \|f(q, \dot q, \theta) - y\|^2)\\
%     \dot V(\theta) &= \gamma_1 \\
%     \bar{V}(\theta) &= \frac{1}{2} (\gamma_1 s^\top M s  + \gamma_2 \|f(x, \theta) - y\|^2)
% \end{align}
% [suppose $\theta$ is a constant here]
% Our online adaptive control algorithm is summarized by the following control law and adaptation law: \begin{align}
%     u &= M(q)\ddot q_r + C(q, \dot q) \dot q_r + g(q) - K s - f(q, \dot q, \hat \theta)\\
%     \dot {\theta} &= - \text{Proj} \left[ \nabla_{\theta}\dot{ V}(\theta)\right] ,\quad \theta(0) = \theta_0 \\
%     \dot {\theta} &= - \gamma \nabla_{\theta}\dot{ V}(\theta) ,\quad \theta(0) = \theta_0  
% \end{align} 

\section{Stability Analysis}\label{sec:stability-analysis}

In this section, we present the theoretical stability analysis of the proposed control scheme, showing that the system remains stable under disturbance $d(q, \dot q, w)$, and that the tracking error $\Tilde{q}$ converges to an error ball exponentially fast. 

We make the following assumptions for system \eqref{full-dynamics} and the neural network $f$:
\begin{assumption}\label{assu:taylor}
    For all environment conditions $w$ and DNN parameter $\theta\in\Theta$, there exists an optimal DNN parameter $\theta^*(w)$, such that the disturbance prediction error $f(q, \dot q, \theta^*(w)) - f(q, \dot q, \theta)$ can be approximated by its Taylor expansion and the approximation error $r$ is bounded. \begin{equation}\label{NN-Taylor}
    f(x, \theta) - f(x, \theta^*(w)) = {J} \Tilde{\theta} + r, \quad \|r\|\leq\bar r
\end{equation}  where $\Tilde{\theta} = \theta - \theta^*$ denotes the network parameter error.
\end{assumption}

\begin{assumption}\label{assu:bounded}
    For all $\theta\in\Theta$, the output of the neural network $f(q, \dot q, \theta)$ is bounded.
\end{assumption}

\begin{assumption}\label{assu:dist}
    The disturbance $d$ and neural representation error $\mu(t)=d - f(q, \dot q, \theta^*(w))$ are uniformly bounded.
    % \begin{equation}
    %     {\sup}_t\|d(q, \dot q, w)\|\leq \bar d, \quad {\sup}_t\|\mu(t)\|\leq \bar \mu
    % \end{equation}
\end{assumption}

Thanks to the meta-learning-based training strategy in Sec. \ref{meta-learning}, the initial network parameter $\theta_0$ is close to the optimal network weight $\theta^*(w)$ for all environment conditions $w$, ensuring that Assumption \ref{assu:taylor} holds for all environment conditions $w$ in the training set $\mathcal{D}$. Additionally, pretraining minimizes the initial parameter error $\Tilde{\theta}(0)$, accelerating the convergence speed of online adaptation.

Assumption \ref{assu:bounded} can be simply satisfied in practice by clipping the network output, and assumption \ref{assu:dist} is common in existing works \cite{O_Connell_2022, xie2023hierarchical}.

% We introduce the network representation error $d(q, \dot q, w) = f(q, \dot q, \theta^*) +  \mu$ and parameter measurement noise $\bar\epsilon=y-f(q, \dot q, \theta^*)=\epsilon + \mu$. 

Substituting the nominal control law \eqref{eq:control-law} into the dynamics \eqref{full-dynamics}, the resulting closed-loop dynamics becomes:
\begin{equation}\label{eq:simplified-dynamics}
    M\dot s + (C+K) s = - f(q, \dot q, \theta) + d(q, \dot q, w)
\end{equation} 

The next step is to prove the boundedness of the network Jacobian $J$, which will be applied in Theorem \ref{theo:ES}.

\begin{theorem}
    Suppose the network $f$ and disturbance $d$ satisfy Assumptions \ref{assu:bounded} and \ref{assu:dist}. The states $s$, $q$, $\dot q$ of the closed-loop system \eqref{eq:simplified-dynamics} are bounded. Furthermore, there exists a constant $\bar J$ as the uniform upper bound of the Jacobian matrix $\|J\|$ for the network $f(q, \dot q, \theta)$.
\end{theorem}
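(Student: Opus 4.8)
The plan is to establish boundedness of the closed-loop signals via a Lyapunov-type argument on the $s$-dynamics, then upgrade signal boundedness to a uniform bound on the Jacobian using the compactness of $\Theta$ and smoothness of the network. First I would consider the Lyapunov candidate $V_s = \tfrac12 s^\top M(q) s$ associated with the reduced dynamics \eqref{eq:simplified-dynamics}. Differentiating along trajectories and using the skew-symmetry property $\dot M = C + C^\top$ (standard for the Lagrangian form \eqref{full-dynamics}) gives $\dot V_s = -s^\top K s + s^\top\big(d(q,\dot q,w) - f(q,\dot q,\theta)\big)$. Assumption \ref{assu:bounded} bounds $\|f(q,\dot q,\theta)\|$ uniformly over $\theta\in\Theta$, and Assumption \ref{assu:dist} bounds $\|d\|$; call the sum of these bounds $b$. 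Then $\dot V_s \le -\lambda_{\min}(K)\|s\|^2 + b\|s\|$, which is negative whenever $\|s\| > b/\lambda_{\min}(K)$, so $\|s\|$ is ultimately bounded (and bounded for all $t\ge 0$ by a constant depending on $s(0)$).

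Next I would propagate this to $\tilde q$ and hence to $q,\dot q$. Since $s = \dot{\tilde q} + \Lambda\tilde q$ with $\Lambda$ positive definite, $s$ bounded implies, viewing this as a stable linear filter driven by the bounded input $s$, that $\tilde q$ and $\dot{\tilde q}$ are bounded — formally via the Lyapunov function $\tfrac12\tilde q^\top\tilde q$, whose derivative is $\tilde q^\top(s - \Lambda\tilde q) \le -\lambda_{\min}(\Lambda)\|\tilde q\|^2 + \|\tilde q\|\,\|s\|$, again giving ultimate boundedness. Because the reference $q_d,\dot q_d$ are bounded by assumption (twice-differentiable reference trajectory), $q = \tilde q + q_d$ and $\dot q = \dot{\tilde q} + \dot q_d$ are bounded, so the full state $(s,q,\dot q)$ evolves in a compact set $\mathcal{S}$.

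Finally, for the Jacobian bound: the online projection step keeps each layer's parameters with norm at most $\nu$, so $\theta$ lies in the compact set $\Theta$ throughout. The map $(q,\dot q,\theta)\mapsto J = \frac{\partial f}{\partial\theta}(q,\dot q,\theta)$ is continuous (the network is a composition of smooth elementary functions — affine maps and smooth activations), hence continuous on the compact set $\mathcal{S}\times\Theta$, and therefore attains a finite maximum. Setting $\bar J := \max_{(q,\dot q)\in\mathcal S,\,\theta\in\Theta}\|J(q,\dot q,\theta)\|$ gives the desired uniform bound.

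The main obstacle is making the bootstrap clean: the Jacobian bound is needed in the adaptation law \eqref{adaptive-law}, but to bound the signals we only used Assumptions \ref{assu:bounded} and \ref{assu:dist} and did \emph{not} need $J$ bounded, so there is no circularity in \emph{this} theorem — the care is simply in verifying that $\dot\theta$ from \eqref{adaptive-law} keeps $\theta$ in a set where Assumption \ref{assu:bounded} applies, which the projection guarantees by construction. A secondary subtlety is that $\dot V_s$ uses $d$ evaluated along the (a priori unknown) trajectory, but since the bound on $\|d\|$ in Assumption \ref{assu:dist} is uniform, the argument does not require knowing the trajectory in advance; the ultimate-boundedness estimate is self-contained.
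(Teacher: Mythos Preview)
Your argument is essentially the paper's proof spelled out in detail: where the paper simply invokes Input-to-State Stability of \eqref{eq:simplified-dynamics} to conclude boundedness of $s,q,\dot q$, you construct the underlying Lyapunov function and carry out the ultimate-boundedness estimate explicitly, and where the paper cites neural-network results to bound $\|J\|$ by a function $\zeta(\|q\|,\|\dot q\|,\nu)$ of the state and spectral-norm bounds, you reach the same conclusion via compactness of $\mathcal S\times\Theta$. One caveat on that last step: the paper's network uses ReLU activations, so $(q,\dot q,\theta)\mapsto J$ is only piecewise constant, not continuous, and your extreme-value-theorem argument does not literally apply; the easy fix---bounding the (sub)gradient by products of layer spectral norms and input norms---is precisely what the references the paper cites provide.
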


% For the proof of stability, we first show the system is stable under disturbance.
% \begin{theorem} Given dynamics Eq. \eqref{full-dynamics} and nominal control law Eq. \eqref{eq:control-law}, together with Assumption \ref{assu:bounded}, the closed-loop system states $q$, $\dot q$ and $s$ are bounded.
% \end{theorem}

% \begin{proof}

\begin{proof} 
Since the input and disturbance to system \eqref{eq:simplified-dynamics} are bounded, we can prove the boundedness of the states $s$, $q$, $\dot q$ via Input-to-State Stability (ISS) \cite{slotine1991applied, robustadaptivecontrol}.

According to \cite{oymak2019generalizationguaranteesneuralnetworks, allenzhu2020learninggeneralizationoverparameterizedneural}, there exists a function $\zeta$ that the network Jacobian $J$ is bounded by input $q$, $\dot q$ and network parameter $\theta$: 
\vspace{-0.1cm}
\begin{equation}
    \|J\|\leq \bar J = {\sup}_t\zeta(\|q\|, \|\dot q\|, \nu)
\end{equation} where $\nu$ is the layer spectral norm bound of network parameter $\theta$.
\end{proof}

% We can use the Lyapunov function $V=s^\top Ms$ under the assumption of bounded $f$ and $d$ to show that \begin{align}
%     \lim_{t\rightarrow\infty} \|s\|\leq \frac{\sup_t\|d-f\|\lambda_{\text{max}}(M)}{\lambda_{\text{min}}(K)\lambda_{\text{min}}(M)}
% \end{align} 
% \end{proof}

Next, we prove that the tracking error $\Tilde{q}$ converges to an error ball exponentially fast.

\begin{theorem}\label{theo:ES} 
Consider the system with dynamics \eqref{full-dynamics}, controller \eqref{eq:control-law}, and adaptation law \eqref{adaptive-law}. Suppose Assumptions \ref{assu:taylor}, \ref{assu:bounded}, and \ref{assu:dist} are satisfied. The parameter estimation error $\Tilde{\theta}$ and trajectory tracking error $\Tilde{q}$ converge to an error ball exponentially fast.
\end{theorem}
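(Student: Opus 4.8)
The plan is to build a composite Lyapunov function in the spirit of the design function $\bar{V}$, namely $\mathcal{V} = s^\top M(q) s + \tfrac{1}{\gamma}\tilde{\theta}^\top\tilde{\theta}$ with $\tilde{\theta} = \theta - \theta^*(w)$, and to show that along closed-loop trajectories $\dot{\mathcal{V}} \le -2\rho\,\mathcal{V} + \delta$ for some constants $\rho > 0$, $\delta \ge 0$. The comparison lemma then gives $\mathcal{V}(t) \le e^{-2\rho t}\mathcal{V}(0) + \delta/(2\rho)$, i.e., exponential convergence of $\mathcal{V}$ to the residual ball $\{\mathcal{V}\le \delta/(2\rho)\}$; since $m_1 I \preceq M(q) \preceq m_2 I$, this bounds $\|s\|$ and $\|\tilde{\theta}\|$ and shows both converge exponentially to an error ball. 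A final step passes the bound on $s$ to the tracking error through the stable first-order filter $\dot{\tilde{q}} = -\Lambda\tilde{q} + s$.

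First I would differentiate $\mathcal{V}$. For the mechanical part, the skew-symmetry of $\dot{M} - 2C$ collapses $\tfrac{d}{dt}(s^\top M s)$ to $2 s^\top(M\dot{s} + C s)$; substituting the simplified dynamics \eqref{eq:simplified-dynamics} and, via Assumptions \ref{assu:taylor} and \ref{assu:dist}, the decomposition $d - f = \mu - J\tilde{\theta} - r$ (with $\mu$ the neural representation error and $\|r\|\le\bar{r}$) gives $\tfrac{d}{dt}(s^\top M s) = -2 s^\top K s - 2 s^\top J\tilde{\theta} - 2 s^\top(r-\mu)$. For the parameter part $\tfrac{2}{\gamma}\tilde{\theta}^\top\dot{\tilde{\theta}}$, I would insert the speed-gradient law \eqref{adaptive-law} together with $f - y = J\tilde{\theta} + r - \mu - \epsilon$: the direct term $J^\top s$ in the law is there precisely so that its contribution cancels the indefinite cross term $-2 s^\top J\tilde{\theta}$; the composite term $J^\top\Gamma(f-y)$ produces a negative-semidefinite $-2\,\tilde{\theta}^\top J^\top\Gamma J\tilde{\theta}$ plus bounded residual terms; and the leakage $-\lambda(\theta - \theta_0)$ contributes $-\tfrac{2\lambda}{\gamma}\|\tilde{\theta}\|^2 - \tfrac{2\lambda}{\gamma}\tilde{\theta}^\top(\theta^* - \theta_0)$, the first being the decisive negative-definite term in $\tilde{\theta}$ and the second a cross term governed by the pretraining gap $\|\theta_0 - \theta^*(w)\|$. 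If $w(t)$ drifts, an extra term $-\tfrac{2}{\gamma}\tilde{\theta}^\top\dot{\theta}^*$ appears, to be controlled under a slowly-varying assumption on $\theta^*$.

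Next I would collect the terms into $\dot{\mathcal{V}} \le -2 s^\top K s - \tfrac{2\lambda}{\gamma}\|\tilde{\theta}\|^2 - 2\,\tilde{\theta}^\top J^\top\Gamma J\tilde{\theta} + (\text{perturbations})$, where the perturbations are $-2 s^\top(r-\mu)$, $-2\,\tilde{\theta}^\top J^\top\Gamma(r-\mu-\epsilon)$, $-\tfrac{2\lambda}{\gamma}\tilde{\theta}^\top(\theta^*-\theta_0)$ and $-\tfrac{2}{\gamma}\tilde{\theta}^\top\dot{\theta}^*$. Using the preceding theorem's uniform Jacobian bound $\|J\|\le\bar{J}$, the bounds on $r$, $\mu$, $\epsilon$, and Young's inequality $a^\top b \le \tfrac{\eta}{2}\|a\|^2 + \tfrac{1}{2\eta}\|b\|^2$, each perturbation splits into a small multiple of $\|s\|^2$ or $\|\tilde{\theta}\|^2$ plus a constant. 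Choosing the splitting constants small enough — which requires a gain condition tying $\lambda_{\min}(K)$, $\lambda/\gamma$ and $\bar{J}$ together, and is exactly where the $\|\theta_0\|^2$-norm regularization in the pretraining objective earns its keep by keeping $\bar{J}$ small — leaves $\dot{\mathcal{V}} \le -\rho_s\|s\|^2 - \rho_\theta\|\tilde{\theta}\|^2 + \delta \le -2\rho\,\mathcal{V} + \delta$ with $\rho = \min\{\rho_s/m_2,\ \gamma\rho_\theta\}$ and $\delta$ an explicit function of $\bar{r}$, the noise bound, $\|\theta_0-\theta^*\|$ and $\|\dot{\theta}^*\|$ that shrinks as the pretraining gap and the Taylor residual shrink, thereby linking the size of the error ball to the quality of pretraining. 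The comparison lemma and the $\Lambda$-filter argument then conclude the proof.

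The step I expect to be hardest is the bookkeeping around the full-DNN Jacobian. Unlike the linearly parameterized setting, $J = \partial f/\partial\theta$ is both state- and parameter-dependent, so every cross-term estimate must route through the $\bar{J}$ bound, and Assumption \ref{assu:taylor} is what lets the mismatch between the Jacobian evaluated at the running $\theta$ (as used in \eqref{adaptive-law}) and the true nonlinear increment $f(x,\theta) - f(x,\theta^*)$ be absorbed into the bounded residual $r$; making the cancellation of the $s$--$\tilde{\theta}$ cross term go through exactly with that same running-$\theta$ Jacobian on both sides, and then verifying that the resulting gain condition on $K$, $\Gamma$, $\lambda$, $\gamma$ and $\bar{J}$ is simultaneously feasible, is the delicate part. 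A secondary subtlety is justifying $\dot{\tilde{\theta}} = \dot{\theta} - \dot{\theta}^*$ with a usable bound on $\dot{\theta}^*$ once the environment $w$ varies; I would treat $\theta^*(w(t))$ as slowly time-varying, so that it only enlarges the error ball without degrading the exponential rate.
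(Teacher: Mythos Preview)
Your proposal is correct and follows essentially the same route as the paper: the identical Lyapunov function $\mathcal{V}=s^\top M s+\gamma^{-1}\tilde\theta^\top\tilde\theta$, the same use of the skew-symmetry of $\dot M-2C$ and the Taylor decomposition \eqref{NN-Taylor} to cancel the $s$--$\tilde\theta$ cross term via the $J^\top s$ part of the adaptation law, arriving at $\dot{\mathcal V}$ dominated by $-2\,\mathrm{diag}(K,\,J^\top\Gamma J+\lambda I)$ plus a bounded perturbation vector. The only technical difference is in how that perturbation is absorbed: the paper applies Cauchy--Schwarz to obtain $\dot V\le -2\rho V+2\sqrt{V/\lambda_{\min}(\mathcal M)}\,D$ and then invokes the standard $\sqrt{V}$ comparison argument from \cite{O_Connell_2022}, whereas you use Young's inequality to reach $\dot{\mathcal V}\le -2\rho\mathcal V+\delta$ directly; both yield the same exponential convergence to an error ball, and your version has the mild advantage of making the trade-off between rate and residual explicit (and of handling a drifting $\theta^*(w(t))$, which the paper leaves implicit).
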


\begin{proof}
    Consider the Lyapunov function given by
\begin{equation}
    V = \begin{bmatrix}
        s\\ \Tilde{\theta}
    \end{bmatrix}^\top \begin{bmatrix}
        M & 0 \\ 0 & \gamma^{-1} I
    \end{bmatrix}\begin{bmatrix}
        s \\ \Tilde{\theta}
    \end{bmatrix}
\end{equation} Using the fact that $\dot M - 2C$ is skew symmetric, and substituting $\Tilde{\theta} = \theta - \theta^*$, the adaptation law $\eqref{adaptive-law}$, and the network Taylor expansion $\eqref{NN-Taylor}$, we have \begin{align}\label{eq:dotV}
    \dot V = &-2\begin{bmatrix}
        s \\ \Tilde{\theta}
    \end{bmatrix}^\top \begin{bmatrix}
        K & 0 \\ 0 & {J}^\top \Gamma {J} + \lambda I
    \end{bmatrix} \begin{bmatrix}
        s \\ \Tilde{\theta}
    \end{bmatrix} \notag\\ & - 2\begin{bmatrix}
        s \\ \Tilde{\theta}
    \end{bmatrix}^\top \begin{bmatrix}
        r-\mu - \epsilon \\ {J}^\top\Gamma r + \lambda(\theta^* - \theta_0)
    \end{bmatrix}
\end{align} 
Since $K$, $\lambda I$, $M$ and $\gamma^{-1}I$ are all positive definite matrices, and $J^\top\Gamma J$ is positive semidefinite, there exists a constant $\rho>0$ such that \begin{equation}
    -2\begin{bmatrix}
        K & 0 \\ 0 & {J}^\top \Gamma {J} + \lambda I
    \end{bmatrix} \preceq \rho \begin{bmatrix}
        M & 0 \\ 0 & \gamma^{-1} I
    \end{bmatrix}
\end{equation} 
% for all $t$. % \guanqi{Need to prove ${J}$ is bounded by $\theta$ is bounded and $x$ is bounded, let NN be ReLU and get a $J$ form}
% Then if the second term of Eq. \eqref{eq:dotV} is bounded, the system is asymptotically stable. 

% Since Taylor approximation error $r$, network representation error $\mu$, initial network parameter residual $\theta^* - \theta_0$ and network Jacobian ${J}$ are all bounded, the second term of Eq. \eqref{eq:dotV} is also bounded. 
% \guanqi{Need to prove J is bounded, maybe some citation}

We define the upper bound for the disturbance term $D$ as \begin{align} \label{eq:general-disturbance}
    D &= \sup_{t}\left\|-2\begin{bmatrix}
        r - \mu \\ {J}^\top\Gamma (r + \epsilon + \mu) + \lambda (\theta^* - \theta_0)
    \end{bmatrix}\right\|
\end{align} where $\epsilon$ is the observation noise of disturbance $d$, and function $\mathcal{M} = \text{diag}(M,\gamma^{-1} I)$.

By Cauchy-Schwartz inequality we obtain the following inequality: \begin{equation}
    \dot V \leq -2 \rho V + 2\sqrt{\frac{V}{\lambda_{\min}(\mathcal{M})}}D
\end{equation} 
% By defining $W = \sqrt{V}$, $2\dot W W = \dot V$, the following equation holds \begin{equation}
%     \dot W \leq -\rho W + \frac{D}{\sqrt{\lambda_{\min}(\mathcal{M})}}
% \end{equation} and furthermore \begin{equation}
%     \sqrt{V} = W \leq e^{-\rho t}\left(W(0) - \frac{D}{\rho \sqrt{\lambda_{\min}(\mathcal{M})}}\right) + \frac{D}{\rho\sqrt{\lambda_{\min}(\mathcal{M})}}
% \end{equation} and 
Following the procedure in \cite{O_Connell_2022}, 
the tracking error and parameter estimation error exponentially converge to the error ball \begin{equation}
    \lim_{t\rightarrow\infty}\left\|\begin{bmatrix}
        s & \Tilde{\theta}
    \end{bmatrix}^\top \right\|\leq \frac{D}{\rho\lambda_{\min}(\mathcal{M})}
\end{equation} 
Thus, the state tracking error $\Tilde{q}$ also converges to an error ball exponentially fast when $s$ is exponentially stable. 
\end{proof}

\section{Experiments}\label{experiment}
We demonstrate the effectiveness of SSML-AC in challenging real-world quadrotor tracking tasks under large dynamic wind conditions. 
% Videos of the experiment results presented in this section are available on our project website: https://project-website

\begin{figure}
    \centering
    \includegraphics[width = 1.0\linewidth]{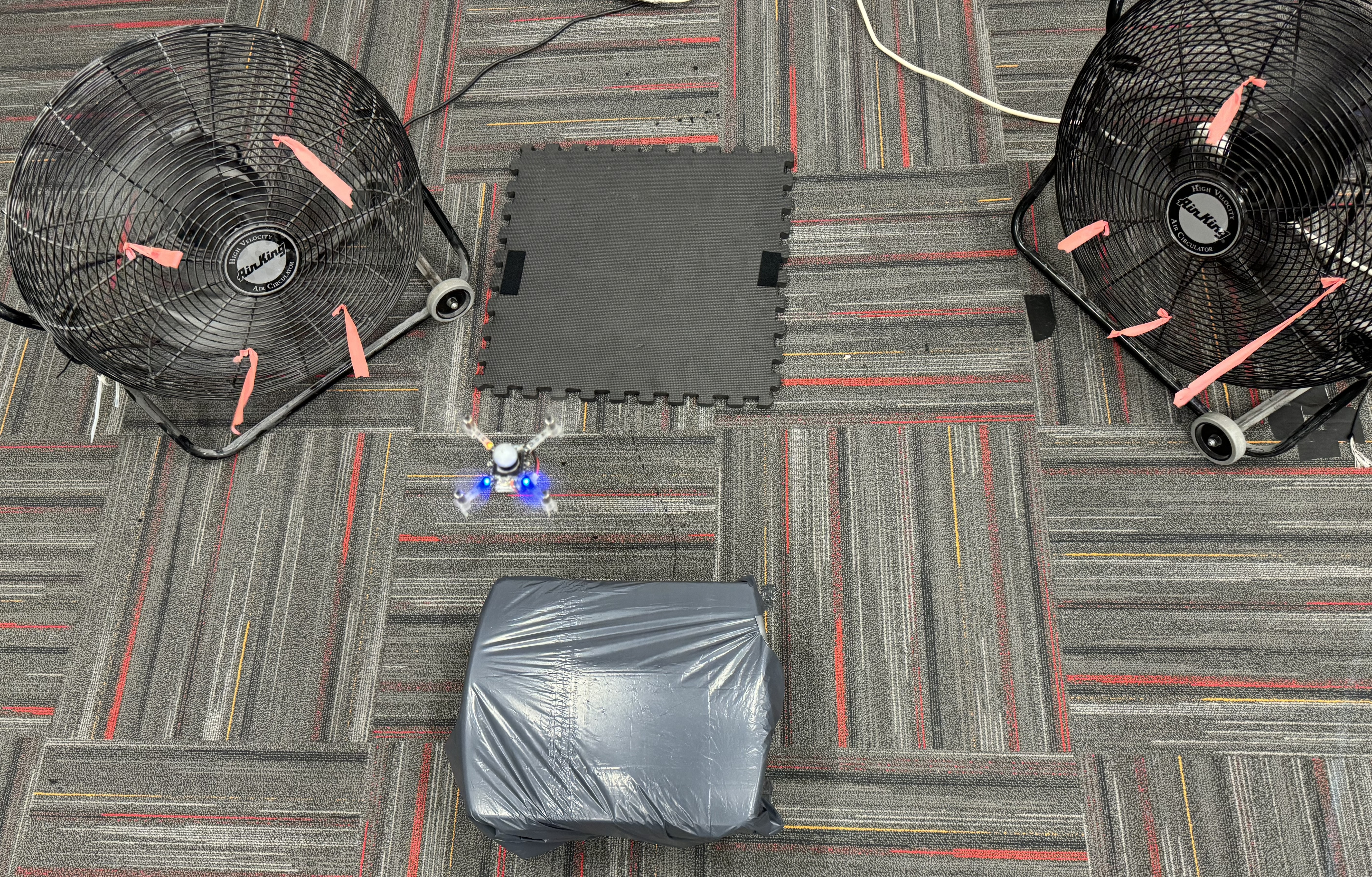}
    \vspace{-0.4cm}
    \caption{Experiments setting of quadrotor trajectory tracking under large dynamic wind conditions.}
    \label{fig:quadrotorfig}
    % \guanqi{change it to a clean setting}
\end{figure}

\subsection{Experiment setup}
 We use a Bitcraze Crazyflie 2.1 enhanced with a thrust upgrade bundle as our quadrotor platform. The quadrotor weighs 42$g$ and has a rotor-to-rotor distance of 9$cm$. The Crazyswarm API\cite{preiss2017crazyswarm} is used for the control and communication with the quadrotor. Full-state estimates of the quadrotor are provided by the Optitrack system, and communication between ground station and quadrotor uses a 2.4 GHz radio, with both state estimates and control commands sending at 50 Hz. The setting of the experiment is to track reference trajectories under the wind disturbance. 
 
 Wind disturbances are generated by two fans, as shown in Fig. \ref{fig:quadrotorfig}. The wind disturbance has an average intensity of $1.0 m/s^2$ (in terms of drone acceleration) and changes rapidly due to the motion of fan blades. 

\subsection{Quadrotor Dynamics}
% rw
% Consider a quadrotor with states given by global position $p\in \mathbb{R}^3$, velocity $v\in\mathbb{R}^3$, attitude rotation matrix $R\in SO(3)$, and body angular velocity $\omega\in\mathbb{R}^3$. Then the dynamics of a quadrotor can be written as \begin{align}
%     \dot p &= v, \quad m\dot v  = mg + R f_u + d,\label{pos-dyn}\\
%     \dot R&=R S(\omega), \quad J\dot\omega = J \omega \times \omega + \tau_u
% \end{align} where $m$ is the mass, $J$ is the inertia matrix of the quadrotor, $S(\cdot)$ is the skew-symmetric mapping, $g$ is the gravity vector, $f_u = [0, 0, T]^\top$ and $\tau_u = [\tau_x, \tau_y, \tau_z]^\top$ are the total thrust and body torques from four rotors predicted by the nominal model, and $d = [d_x, d_y, d_z]^\top$ are forces resulting from unmodelled aerodynamic effects due to varying wind conditions and all other disturbances. 

The dynamics of the quadrotor can be written as \begin{align}
    \dot p &= v, \quad m\dot v  = mg + R f_u + d,\label{pos-dyn}\\
    \dot R&=R S(\omega), \quad J\dot\omega = J \omega \times \omega + \tau_u
\end{align} where $p\in \mathbb{R}^3$, $v\in\mathbb{R}^3$, $R\in SO(3)$ and $\omega\in\mathbb{R}^3$ are position, velocity, attitude rotation matrix and body angular velocity, respectively. $m$ is the mass, $J$ is the inertia matrix of the quadrotor, $g$ is the gravity vector, 
 $S(\cdot)$ is the skew-symmetric mapping, $f_u\in\mathbb{R}$ and $\tau_u\in\mathbb{R}^3$ are the generated thrust and body torques from four rotors, and $d \in \mathbb{R}^3$ represents unmodeled aerodynamic disturbances. 

We reformulate the position dynamics \eqref{pos-dyn} into the equivalent form of $\eqref{full-dynamics}$ by assuming $M(q)=mI$, $C(q, \dot q)=0$ and $u = Rf_u$. Our method operates within the position control loop to compute the desired force $u_d$, which is then decomposed into the desired attitude $R_d$ and thrust $T_d$ using kinematics. These values are sent to the attitude and angular rate controller, where the onboard firmware manages attitude tracking and thrust generation \cite{minimum-snap}.

\subsection{Data Collection and Training Details}
% \guanqi{no trajectory label}
% \guanqi{training parameter-iteration epoch}
We collect data by having the drone track three randomly generated $60$ seconds trajectories using a nominal PD controller under wind disturbance. The wind disturbances are generated by two fans, and data including robots state and input are collected $x = [q, \dot q, u]$. 
% $[v, q, \omega, u]$ where $q \in\mathbb{R}^4$ denotes the quaternion representation of drone attitude. 
The data is filtered with a 4th-order Butterworth filter with cutoff frequency of $20$ Hz, and acceleration $\ddot q$ is computed using a Five-point stencil approach. This provides a noisy measurement of the unmodelled dynamics $d$, denoted as $y = f(q, \dot q, \theta) + \epsilon$. 
We construct the meta-learning dataset $\mathcal{D}$ from the trajectories $\{x_k^{(i)}, y_k^{(i)}\}_{i=1}^{N}$, and the trajectory label is not required for building $\mathcal{D}$.
% Then, we have the trajectory dataset $D = \{x_k^{(i)}, y_k^{(i)}\}_{i=1}^{N_k}$ where $k$ denotes the trajectory index (here $k=3$) and $i$ denotes the data index for each trajectory.
% It is important to note that the drone position $p$ is NOT included in the dataset, ensuring that the neural network does not overfit to the environment setup.

We implement the SSML framework in Python using Pytorch \cite{NEURIPS2019_9015}. 
Since we notice that the aerodynamic effects only depend on the UAV velocity $v$, angular velocity $\omega$ and attitude quaternion $q$, the input state of network $f$ is constructed as a $11$-dimensional vector $x = [v, \omega, q, u]$, excluding the drone position to avoid overfitting. The DNN consists of 3 fully connected hidden layers with width $50$ using ReLU activation. Spectral normalization is applied to $\theta_0$ in both adaptation and training stages to constrain the neural network Lipschitz and Jacobian. We set the adaptation and prediction horizons to $25$ steps ($0.5$ second), and train for $50$ epochs with base-task learning rate $\alpha = 0.002$ and meta-task learning rate $\beta = 0.001$. Regularization coefficients are selected as $\lambda_{dir}=0.5$ and $\lambda_{norm}=0.05$.

\begin{table}[]
\caption{Trajectory tracking performance}
\centering\footnotesize
\begin{tabular}{|p{1.4cm}|p{1cm}|p{1cm}|p{1cm}|p{1cm}|p{1cm}|}
\hline
% {>{\centering\arraybackslash}m{1.2cm} | >{\centering\arraybackslash}m{1.4cm} | >{\centering\arraybackslash}m{1.4cm} | >{\centering\arraybackslash}m{1.4cm} | >
% {\centering\arraybackslash}m{1.4cm}
% }
& SSML-AC & SSML-AC-LL & Vanilla-NN & INDI & PID\\
    \hline
    RMSE [cm] & $\textbf{5.3}\pm\textbf{0.5}$ & $6.6\pm 0.5$ & $7.9 \pm 0.9$ & $7.0\pm 0.6$ & $8.7\pm 0.9$\\
    \hline
\end{tabular}
\label{table:comparison}
\end{table}

\subsection{Control Performance in Flight Tests}
% \guanqi{neural-fly SSML-LL explaination}
% \guanqi{define baseline, add baseline item, explain baseline}
% \guanqi{single precision}
We compare SSML-AC approach with four baselines: 
\begin{enumerate}
    \item Vanilla-NN: Pretraining DNN without meta-learning.
    \item SSML-AC Last Layer (SSML-AC-LL): Adapt only the last layer linear coefficient of DNN, similar with Neural-Fly \cite{O_Connell_2022}.
    \item Incremental Nonlinear Dynamic Inversion (INDI)\cite{karaman2018indi}.
    \item PID controller with nominal quadrotor model.
\end{enumerate}
% 1) Vanilla-NN: Pretraining initial weight $\theta_0$ without meta-learning 2) SSML Last Layer (SSML-LL): only adapt the last layer linear parameter of the network $f$ in pretraining and online adaptation. Classic UAV controllers such as INDI \cite{karaman2018indi} controller and PID controller are also included as baselines. 
% The tracking performance are summarized in Table \ref{table:comparison}. 
The tracking reference trajectory is a $1.2$m long and $1.0$m wide figure-8 trajectory, repeated $3$ times.

The disturbance prediction and tracking performance is shown in Fig. \ref{fig: dist-prediction}. The experiment results show that the PID controller exhibits poor tracking performance due to the lack of wind disturbance feedforward compensation. The INDI controller estimates disturbances from IMU and motion capture system feedback, but applies a low-pass filter to disturbance estimation, leadning to phase delays and loss of high-frequency disturbance details, as shown in the second row of Fig. \ref{fig: dist-prediction}. For the vanilla-NN model, since the wind disturbance is dependent on the position of the drone, without meta-learning, it only learns an average disturbance prediction, lacking parameter flexibility in online adaptation. 
% In contrast, our algorithm outperforms the baselines both in terms of disturbance prediction and tracking performance, since the network not only learns the disturbance pattern, but also how to adapt network weight in real-time.

In contrast, SSML-AC and SSML-AC-LL outperform all other baselines in both disturbance prediction and tracking accuracy, illustrating the importance of meta-learning in pretraining stage. Furthermore, SSML-AC further improves disturbance prediction accuracy and tracking performance, demonstrating the advantage of full-network adaptation. Repeated experiments show that both SSML-AC and SSML-AC-LL have lower variance than other baselines, highlighting the role of accurate disturbance representation in improving the consistency and robustness of online adaptation.

Although SSML-AC significantly outperforms other baselines, its performance is limited by the underactuated nature of quadrotor systems, particularly due to the response speed of the attitude control loop. Additionaly, the complex interaction between wind disturbances and the propellers affects the torque output of the quadrotor, which SSML-AC cannot fully compensate for.

\begin{figure}[h]
    \centering
    \includegraphics[width = 1.0\linewidth]{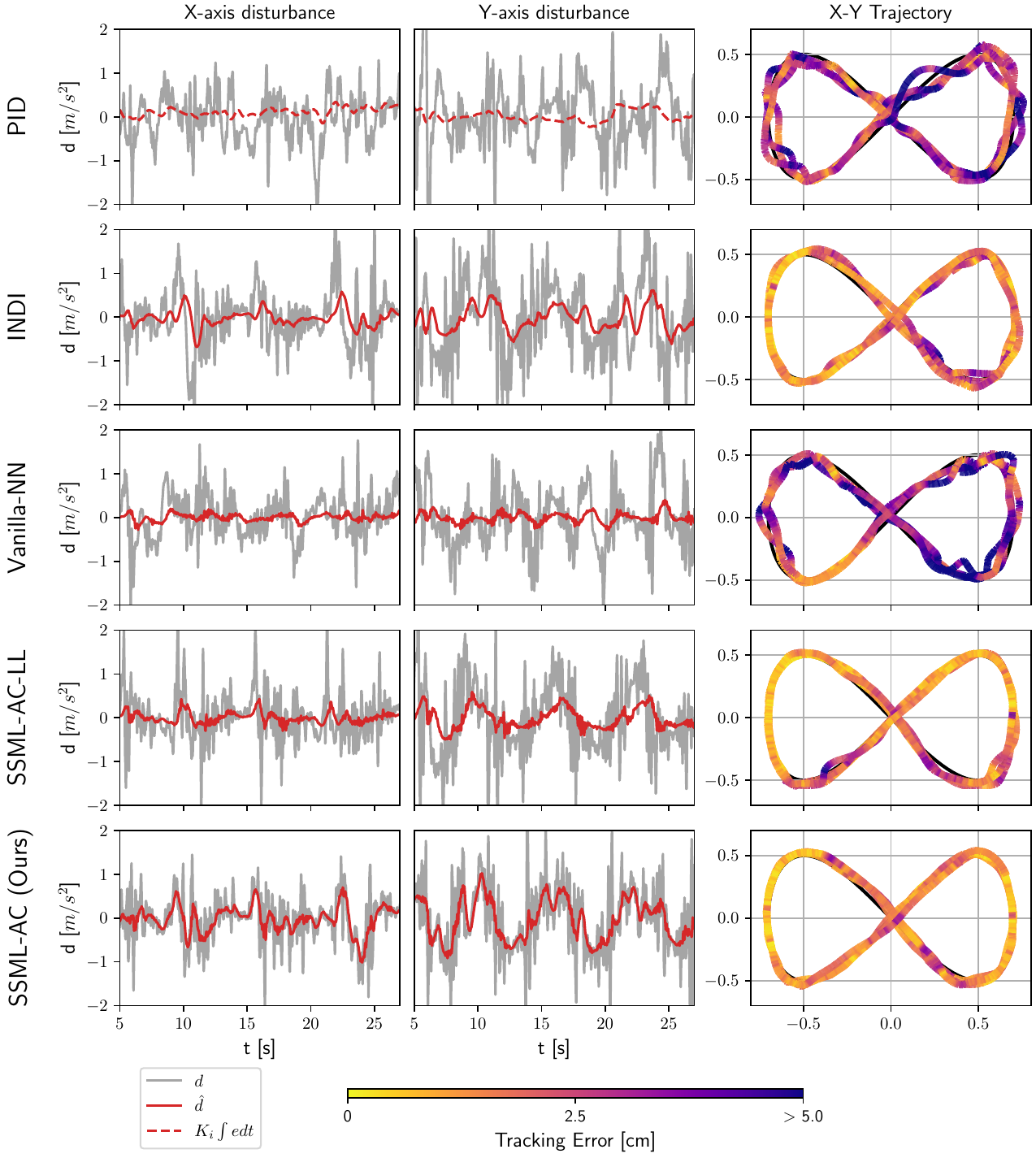}
    \vspace{-0.5cm}
    % \caption{\textbf{Measured Disturbance versus Online Disturbance Prediction.} Comparison of wind-effect \(y\) and \(z\)-axis disturbance prediction for different methods against the actual disturbance measurements. NN-LL, Vanilla-NN, and INDI estimate the disturbance with a phase lag. Although increasing adaptive gains can decrease this lag, it also amplifies noise and leads to unstability. Specifically, Vanilla-NN shows low disturbance prediction accuracy on the \(y\)-axis because the network merely learns the average disturbance value and fails to adapt online. NN-LL also shows comparatively lower prediction accuracy on the \(y\)-axis than NN, highlighting the advantages of fine-tuning all network layers. Thanks to adaptive oriented meta-learning, NN demonstrates enhanced performance through accurate disturbance prediction.} 
    \caption{Disturbance prediction and tracking performance of each controller. 
    The PID controller struggles with handling unknown wind dynamics, while the INDI controller is affected by acceleration measurement noise and delays in disturbance prediction. The Vanilla-NN model only learns the average disturbance, showing limited flexibility in parameter adaptation. Both SSML-AC and SSML-AC-LL perform well in trajectory tracking; however, SSML-AC achieves more accurate disturbance prediction and lower tracking error (Table \ref{table:comparison}), highlighting the benefits of full-network adaptation. 
    }
    % \guanqi{make image larger, bar at bottom}
    \label{fig: dist-prediction}
    % \vspace{-0.7cm}
\end{figure}

% \guanqi{limitation and future work (conclusion and future work), or no}
% \guanqi{learning dynmaics, RL, UAV, no adaptive controller}

% \begin{table}[]
% \caption{Controller parameter configuration}
% \centering
% \label{table:config}
% \begin{tabular}{|c|c|c|c|c|c|}
% % {>{\centering\arraybackslash}m{1.2cm} | >{\centering\arraybackslash}m{1.4cm} | >{\centering\arraybackslash}m{1.4cm} | >{\centering\arraybackslash}m{1.4cm} | >
% % {\centering\arraybackslash}m{1.4cm}
% % }
% \hline
% & NN(Ours) & NN LL & NN trivial & INDI & PD\\
%     \hline
%     RMSE (cm) & $\textbf{5.827}\pm$ & $6.742\pm$ & $7.302\pm$ & $7.042\pm$ & $9.878\pm$\\
%     % \hline
%     \hline  
% \end{tabular}
% \end{table}

\section{Discussion and Future Work}\label{conclusion}
This paper presents SSML-AC, a Self-Supervised Meta-Learning approach for all-layer DNN-based Adaptive Control. Unlike previous works, our method eliminates the assumption of linear parameterization of uncertainties, allowing DNNs to fully unleash their potential in online adaptation. 
We also provide a rigorous theoretical analysis to ensure the stability of the proposed adaptive controller.
Our framework is validated on challenging real-world quadrotor trajectory tracking tasks under dynamic wind conditions, showing distinct performance improvements over both classic and learning-based adaptive control baselines. 

In the experiments, we did not account for the underactuated nature of the quadrotor, and the wind disturbance is simplified as a matched disturbance for the vehicle position.
Future work will focus on generalizing SSML-AC to handle underactuated systems and unmatched model uncertainties.

\bibliographystyle{IEEEtran}
\bibliography{root}

\end{document}